\newtheorem{theorem}{Theorem}
\title{Real-time Semantic 3D Dense Occupancy Mapping with Efficient Free Space Representations}
\author{
  Yuanxin Zhong, Huei Peng\\
  Mechanical Engineering\\
  University of Michigan, Ann Arbor 
  United States\\
  \texttt{\{zyxin, hpeng\}@umich.edu} \\
}
\begin{document}
\maketitle


\begin{abstract}
    A real-time semantic 3D occupancy mapping framework is proposed in this paper. The mapping framework is based on the Bayesian kernel inference strategy from the literature. Two novel free space representations are proposed to efficiently construct training data and improve the mapping speed, which is a major bottleneck for real-world deployments. Our method achieves real-time mapping even on a consumer-grade CPU. Another important benefit is that our method can handle dynamic scenarios, thanks to the coverage completeness of the proposed algorithm. Experiments on real-world point cloud scan datasets are presented.
\end{abstract}

\keywords{Semantic Segmentation, 3D Occupancy Mapping, Real-time} 


\section{Introduction}
	
    Mapping is an important task for robotic applications, including automated vehicles. It is the foundation of subsequent tasks such as navigation and planning.  If conducted offline, 3D mapping can generate HD-Maps, focusing on the static portion of the environment. On the other hand, the online mapping must understand both the static and dynamic elements of the environment, and must do so in real-time. This challenging task is recently explored by some researchers for applications such as robots exploring unknown environments, or for autonomous vehicles interacting with dynamic road users. 
    
    There are several 3D mapping formats including polygon mesh maps \cite{wiemann2016optimizing}, volumetric maps\cite{o2012gaussian, gan2020bayesian}, surfel maps and point cloud maps \cite{pizzoli2014remode}. Polygon meshes originate from computer graphics, representing objects by vertex points and the edges between them. Polygon meshes can be used to directly model object surfaces, but it's difficult to incorporate uncertainty information. Point cloud maps \cite{pizzoli2014remode} are commonly used in Simultaneous Localization and Mapping due to their simplicity and flexibility in transformation. However, it suffers from heterogeneous density which makes it harder for subsequent planning usage. Surfel maps \cite{chen2019suma++} are based on point clouds with surface normal stored in each point. Lastly, volumetric maps are based on volumetric grids storing the map information including occupancy probabilities \cite{o2012gaussian}, truncated signed distance functions (TSDF), \cite{oleynikova2017voxblox} or semantic probabilities \cite{gan2020bayesian}. These grids are usually stored in a sparse data structure for better time and memory efficiency. Volumetric maps are particularly suitable for dense mapping, which produces maps based on continuous representation and suitable for downstream tasks like path planning.
    
    For semantic segmentation and mapping, much progress has been accomplished recently using deep neural networks. Researchers have developed neural networks for 2D semantic segmentation \cite{zhao2017pyramid} on images and 3D semantic segmentation \cite{choy20194d} on point clouds. With the ability to label point clouds semantically, semantic mapping quality is improved by replacing classical classifier labels \cite{sengupta2013urban} with ones from neural networks \cite{gan2020bayesian}. Semantic segmentation provides an alternative and a direct way to handle dynamic elements in the environment, which is usually realized by detecting dynamic segments \cite{guizilini2019dynamic} in the map based on geometry.
    
   The occupancy grid map based on Bayesian inference with sparse kernel \cite{gan2020bayesian} is chosen as our backbone method due to its efficiency and nature of dense mapping. In this paper, we propose a method with two important features: real-time semantic mapping and dynamic object handling. The main contributions of this paper are: 1) We propose a probabilistic random sampling method for free space representation, 2) A spherical R-tree-based free beam representation, and 3) Validation of the improvement in inference speed and accuracy of dynamic object removal through experiments. \textbf{100x} speed improvement can be identified using a real-world point cloud dataset.


\section{Related Work}
\label{sec:review}

\subsection{Semantic Occupancy Mapping}

Semantic mapping using lidars is usually based on volumetric representation due to its natural fitness for preserving a probability distribution over all occupied positions. Volumetric occupancy maps use 3D voxels with each voxel holding information about occupancy, TSDF and semantic information. The information is updated incrementally when a new observation is available. Noted that occupancy can be considered as binary semantic information. Gaussian Process Occupancy Map \cite{o2012gaussian} (GPOM) is a representative and widely applied framework which estimates the occupancy probability considering the correlation between grids. However, GPOM suffers from $\mathcal{O}(N^3+N^2M)$ time complexity (for $N$ range measurements and $M$ query points). For faster storage, occupancy grids are usually stored using blocks indexed by a hashmap \cite{niessner2013real}, resulting in a sparse voxel grid. And for faster inference, \cite{kim2015gpmap} partition the voxels into blocks and achieve $\mathcal{O}\left(\frac{N^3}{K^2}\right)$ training and $\mathcal{O}\left(\frac{N^2M}{K^2}\right)$ inference complexity where $K$ is the number of blocks. \cite{doherty2019learning, gan2020bayesian} introduces Bayesian Kernel Inference into the mapping problem, further reducing the inference complexity to $\mathcal{O}(M \log N)$.

For semantic information, Markov Random Field (MRF) \cite{he2013nonparametric} and Conditional Random Field (CRF) \cite{sengupta2013urban, yang2017semantic} were usually used in early semantic segmentation work. Later, deep neural networks are becoming popular and show great performance for both 2D images \cite{zhao2017pyramid} and 3D point clouds \cite{choy20194d}. With semantically labeled point clouds or depth images, semantic maps can be generated by fusing semantic prediction for each position by voting \cite{sengupta2013urban}, CRF \cite{kundu2014joint} or Bayesian inference \cite{gan2020bayesian}. The last approach \cite{gan2020bayesian} is used as the backbone in our work for its learning capability and efficiency.

\subsection{Free space representation in Occupancy Mapping}

In other volumetric maps like the TSDF map, free space is usually handled by ray casting along each scanning beam \cite{hornung2013octomap, oleynikova2017voxblox}. However in the occupancy map with Bayesian inference, training and testing steps are separated, which requires the free space to be represented by some geometries. Selected or sampled points along beams are commonly used in the literature \cite{o2012gaussian, ramos2016hilbert, doherty2019learning, wang2016fast, o2011continuous} to represent free space. \cite{o2012gaussian} uses the closest point on each beam to the query point as a free space data point. \cite{wang2016fast, doherty2019learning} linearly interpolate between sensor origin and scanning hits with free space points. \cite{doherty2019learning} also propose to use point-to-line distance in the kernel calculation to reconstruct sensor beams from free space points during testing. \cite{o2011continuous} proposed adaptive order quadrature, which actually weighs sampled free points by the length of its beam. On the other hand, \cite{ramos2016hilbert} sampled points between sensor and hits with uniform probability. Point-based free space representation can be efficient, but they are not complete in the sense that they cannot guarantee full coverage of all possible free space. Instead, the line-based representation to be presented in Section \ref{sec:rtree-method} can ensure all possible free space is covered.

In \cite{hornung2013octomap}, the authors identified the problem of false negatives brought by close beams with shallow angles as shown in its Figure 10. This problem is not present in free space representation with sampled points, since the free points can be down-sampled before training, and points too close to each other will be eliminated. However, this problem is still relevant when free space is represented in other forms, which we face in our method. The solution to the problem in our method will be discussed in Section \ref{sec:rtree-method}.


\section{Methodology}
\label{sec:method}

In this section, the mapping framework will first be introduced, then the two proposed representations for free space modeling are discussed.

\subsection{Semantic Occupancy Mapping}

We adopt the semantic occupancy mapping framework based on Bayesian Kernel Inference with Categorical likelihood as introduced in \cite{gan2020bayesian}. The integration process of new range measurements in this framework is separated into two phases. First, the input point cloud will be down-sampled, and free space points will be sampled based on the measurements and added to input data points. Then the data points will be collected by each block to construct training samples per block. In the second phase, existing blocks in the occupancy map which are adjacent to training blocks will be collected as test blocks, and Bayesian inference based on sparse kernel will be performed to update the Categorical likelihood stored in each block. The key for fast inference is to use a sparse kernel defined below to constrain the amount of data being considered.
\begin{align}
    k(x,x')=\mathbb{1}_{d<l}\sigma_0\left[\frac{2+\cos(2\pi r)}{3}+\frac{1}{2\pi}\sin(2\pi r)\right]
    \label{eq:kernel}
\end{align}

where $x$ and $x'$ are training data point and testing data point, $d=\Vert x-x'\Vert, r=\frac{d}{l}$, and kernel size $\sigma_0$ and kernel length $l$ are hyper-parameters. With this kernel, query data can be limited by $l$, resulting in fast local inference.
We further improve the whole process to meet real-time inference requirements in several aspects, as stated below:
\begin{enumerate}
    \item Heavily templatize the code to improve computation performance
    \item Use a forward iteration to collect data points in each block, rather than query data points per block through an R-Tree, as in \cite{doherty2019learning, gan2020bayesian}. This improves the time complexity of gathering training data from $\mathcal{O}(M \log N)$ to $\mathcal{O}(N)$, where $M$ is usually of the same order of magnitude as $N$.
    \item New free space representations are proposed to improve the inference speed of free space and provides extra flexibility for adjusting speed over accuracy. The following subsections will discuss the details.
\end{enumerate}

\subsection{Random Free Space Sampling}
\label{sec:random-sampling}


Point sampling is a common strategy to gather free space data points from range measurements as additional training data. Evenly spaced \cite{wang2016fast} or uniformly sampled \cite{ramos2016hilbert} points can cover free space along a beam in arbitrary resolution. However, at locations near the sensor origin, sampled points are more crowded than those at farther locations, which makes the sampling inefficient. Based on this observation, we propose a sampling strategy with designed probabilities along each sensor beam. Consider a 2D range sensor and a 2D grid occupancy map, the sampled points can cover the free space evenly with a linear probability along a beam. This behavior will be referenced as \textbf{\textit{linear-weighted sampling}} in the remainder of the paper.

\begin{theorem}
Assume a range sensor shoots beams uniformly in each direction represented by angle $\varphi\sim \mathcal{U}(-\pi,\pi)$, and the measured range is a constant $d$. Points sampled with probability $p(r)\propto r$ (where $r$ is the distance from the sampled point to the sensor origin) along each beam will cover the area within circle $r<d$ almost uniformly, i.e. the points can be regarded as drawn approximately from a 2D uniform distribution.
\label{theorem:linear_sampling}
\end{theorem}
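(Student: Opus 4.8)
The plan is to compute the probability density of a sampled point at position $(r,\varphi)$ in polar coordinates and show it matches (approximately) the uniform density on the disk. First I would set up the joint density of the sampled point. The beam direction is $\varphi \sim \mathcal{U}(-\pi,\pi)$, so its density is $\frac{1}{2\pi}$, independent of $r$. Along a beam of fixed length $d$, the radial coordinate is drawn with density $p(r) = \frac{2r}{d^2}$ for $r\in(0,d)$ — this is the normalized version of $p(r)\propto r$. Since direction and radial position are chosen independently, the joint density in $(r,\varphi)$ coordinates is $f_{r,\varphi}(r,\varphi) = \frac{1}{2\pi}\cdot\frac{2r}{d^2} = \frac{r}{\pi d^2}$.

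Next I would convert to Cartesian coordinates. The change of variables from $(r,\varphi)$ to $(x,y)$ has Jacobian factor $\frac{1}{r}$ (since $dx\,dy = r\,dr\,d\varphi$), so the Cartesian density is $f_{x,y}(x,y) = \frac{1}{r}\cdot\frac{r}{\pi d^2} = \frac{1}{\pi d^2}$ for all points inside the disk $r<d$. This is exactly the uniform density on a disk of radius $d$, whose area is $\pi d^2$. Hence the sampled points are distributed uniformly on the disk. The power of $r$ in $p(r)\propto r$ is precisely what cancels the $r$-weighting in the polar-to-Cartesian Jacobian; any other power would leave a residual radial dependence, so linear weighting is the unique choice (up to normalization) achieving exact uniformity under these idealized assumptions.

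Finally I would address where the word "almost" / "approximately" in the statement comes from, since the computation above is in fact exact under the stated idealization. The approximation enters through the discrepancy between this continuous model and the actual discrete setting: real sensor beams fire at finitely many angles (a fine but finite angular grid), measured ranges are not all equal to a single constant $d$, and the occupancy map discretizes space into finite-resolution grid cells rather than continuous coordinates. I would note that as the angular resolution becomes fine and within a region of roughly constant measured range, the empirical distribution of sampled points converges to the uniform distribution computed above, which justifies treating the samples as "drawn approximately from a 2D uniform distribution." The main (and only real) obstacle is being precise about this limiting/approximation statement — the exact density computation itself is a one-line change-of-variables argument; the modeling caveats are what require careful phrasing rather than hard mathematics.
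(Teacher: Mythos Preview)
Your argument is correct and in fact stronger than what the paper proves: the change-of-variables shows the density is \emph{exactly} $1/(\pi d^2)$ on the disk, so the word ``approximately'' in the statement is indeed only a modeling caveat, as you note.

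The paper takes a markedly different and more laborious route. It fixes a unit grid cell at integer position $(i,j)$, writes the probability $f(i,j)$ that a sample falls in that cell as a double integral in polar coordinates, declares the inner integral intractable, and then approximates the square cell by a rotated ``diamond'' region bounded by polar coordinates $(\bar r \pm \tfrac{\sqrt{2}}{2}, \bar\varphi)$ and $(\bar r, \varphi_{u}),(\bar r,\varphi_b)$. After integrating over this approximate region it specializes to the diagonal $i=j$, makes a further approximation $\arctan\tfrac{i+1/2}{i-1/2}-\arctan\tfrac{i-1/2}{i+1/2}\approx \tfrac{1}{i}$, and concludes $f(i,i)\approx 2c$ is constant; rotational symmetry then extends this to all directions. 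So in the paper the ``approximately'' arises from two genuine analytic approximations inside the proof, whereas in your version the continuous result is exact and the approximation lives entirely in the gap between the idealized model and the discrete sensor/grid. Your approach is shorter, more elementary, and more informative (it identifies the linear weight as the unique exact choice); the paper's discretized computation does not buy any additional insight and could have been avoided by the Jacobian argument you give.
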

\begin{proof}
Given $f(i,j),i\in\mathbb{Z},j\in\mathbb{Z}$ being the probability of a sample lies in the grid at position $x=i,y=j$ when the samples are drawn from a certain distribution. Without loss of generality, we only consider points in first quadrant ($x>0, y>0$). Let $(r, \varphi)$ be the corresponding polar coordinate for a point $(x,y)$ in the Cartesian coordinate.
\begin{align}
f(i,j)&=P\left(|x-i|<\frac{1}{2},|y-j|<\frac{1}{2}\right)=P\left(|r\cos\varphi-i|<\frac{1}{2},|r\sin\varphi-j|<\frac{1}{2}\right) \\ &=P\left(i-\frac{1}{2}<r\cos\varphi<i+\frac{1}{2},j-\frac{1}{2}<r\sin\varphi<j+\frac{1}{2}\right)\\ &=P\left(\frac{j-\frac{1}{2}}{i+\frac{1}{2}}<\tan\varphi<\frac{j+\frac{1}{2}}{i-\frac{1}{2}},\max\left\{\frac{j-\frac{1}{2}}{\sin\varphi},\frac{i-\frac{1}{2}}{\cos\varphi}\right\}<r<\min\left\{\frac{j+\frac{1}{2}}{\sin\varphi},\frac{i+\frac{1}{2}}{\cos\varphi}\right\}\right)
\end{align}
According to the assumptions, the points are sampled uniformly in angle but linear in radius direction, i.e.
\begin{align}
p(r,\varphi)=cr\quad\mathrm{s.t.}\quad\int^\pi_{-\pi}\int^d_0p(r,\varphi)\mathrm{d}r\mathrm{d}\varphi=1
\end{align}
Then in this case
\begin{align}
f(i,j)=\int^{\arctan\frac{j+1/2}{i-1/2}}_{\arctan\frac{j-1/2}{i+1/2}}\int_{\max\left\{\frac{j-1/2}{\sin\varphi},\frac{i-1/2}{\cos\varphi}\right\}}^{\min\left\{\frac{j+1/2}{\sin\varphi},\frac{i+1/2}{\cos\varphi}\right\}} cr \mathrm{d}r\mathrm{d}\varphi
\end{align}
Since the inner integral is intractable, we approximate the square area defined by $(x\pm\frac{1}{2}, y\pm\frac{1}{2})$ in cartesian coordinate, with a rotated square area defined by $(\bar{r}, \varphi_u)$, $(\bar{r}, \varphi_b)$, $(\bar{r}\pm\frac{\sqrt{2}}{2}, \bar{\varphi})$ in polar coordinate, where $\bar{r}=\sqrt{i^2+j^2}$, $\varphi_u=\arctan\frac{j+1/2}{i-1/2}$, $\varphi_b=\arctan\frac{j-1/2}{i+1/2}$, $\bar{\varphi}=\arctan\frac{j}{i}$. This approximation is exact when $x=y$.

Then
\begin{align}
    f(i,j)&\approx\int^{\varphi_u}_{\bar{\varphi}}\int^{\bar{r}+\frac{\sqrt{2}}{2}\left(1-\frac{\varphi-\bar{\varphi}}{\varphi_u-\bar{\varphi}}\right)}_{\bar{r}-\frac{\sqrt{2}}{2}\left(1-\frac{\varphi-\bar{\varphi}}{\varphi_u-\bar{\varphi}}\right)}cr\mathrm{d}r\mathrm{d}\varphi + \int^{\bar{\varphi}}_{\varphi_b}\int^{\bar{r}+\frac{\sqrt{2}}{2}\left(1-\frac{\bar{\varphi}-\varphi}{\bar{\varphi}-\varphi_b}\right)}_{\bar{r}-\frac{\sqrt{2}}{2}\left(1-\frac{\bar{\varphi}-\varphi}{\bar{\varphi}-\varphi_b}\right)}cr\mathrm{d}r\mathrm{d}\varphi \\
    &=\int^{\varphi_u}_{\bar{\varphi}}\sqrt{2}c\bar{r}\left(1-\frac{\varphi-\bar{\varphi}}{\varphi_u-\bar{\varphi}}\right)\mathrm{d}\varphi+\int^{\bar{\varphi}}_{\varphi_b}\sqrt{2}c\bar{r}\left(1-\frac{\bar{\varphi}-\varphi}{\bar{\varphi}-\varphi_b}\right)\mathrm{d}\varphi \\
    &=\sqrt{2}c\bar{r}(\varphi_u-\varphi_b)
\end{align}

Consider the situation where $i=j$, then $\bar{r}=\sqrt{2}i$ and
\begin{align}
f(i,i)=\sqrt{2}c\bar{r}(\varphi_u-\varphi_b)=2ci\left(\arctan\frac{i+\frac{1}{2}}{i-\frac{1}{2}}-\arctan\frac{i-\frac{1}{2}}{i+\frac{1}{2}}\right)
\label{eq:fii}
\end{align}
Notice that \begin{align}
    \frac{\mathrm{d}}{\mathrm{d}x}\left(\arctan\frac{x+\frac{1}{2}}{x-\frac{1}{2}}-\arctan\frac{x-\frac{1}{2}}{x+\frac{1}{2}}\right)=-\frac{1}{x^2+\frac{1}{4}}\approx-\frac{1}{x^2}=\frac{\mathrm{d}}{\mathrm{d}x}\frac{1}{x}
\end{align}
Then we can further approximate Equation \ref{eq:fii} as
\begin{align}
    f(i,i)\approx2ci \cdot \frac{1}{i}=2c\quad(\text{constant})
\end{align}
Therefore we can prove that the probability of sampled point lies inside grid $x=i,y=i$ is approximately constant along line $i=j>0$. Since the sampling is uniform with regards to angle, hence the conclusion can be applied to any direction by rotation, which proves the original proposition.
\end{proof}

\begin{figure*}[!t]
\centering
\includegraphics[width=\textwidth]{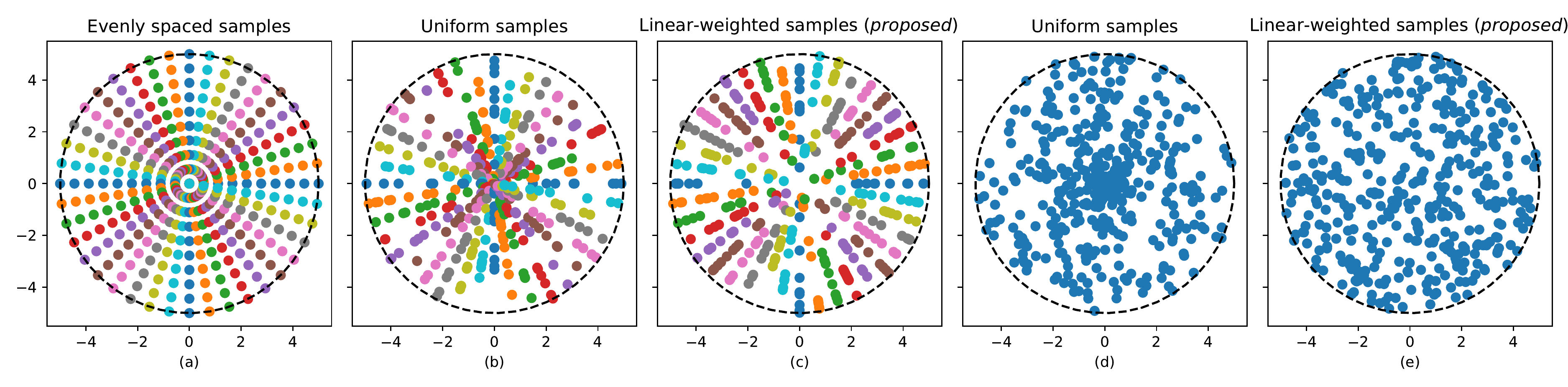}
\caption{Comparison of different free space sampling strategies with fixed beam length. All plots include 135 sampled points and the color of each point represents which lidar beam the point belongs to. (a) Evenly spaced beams with evenly sampled points along beams. (b) Evenly spaced beams with uniformly sampled points along beams. (c) Evenly spaced beams with linear-weighted sampled points. (d) Points sampled uniformly with regards to angle and radius. (e) Points sampled uniformly in angle and linearly in radius. }
\label{fig:sampling_comparison}
\end{figure*}

\begin{figure*}[!t]
\centering
\includegraphics[width=\textwidth]{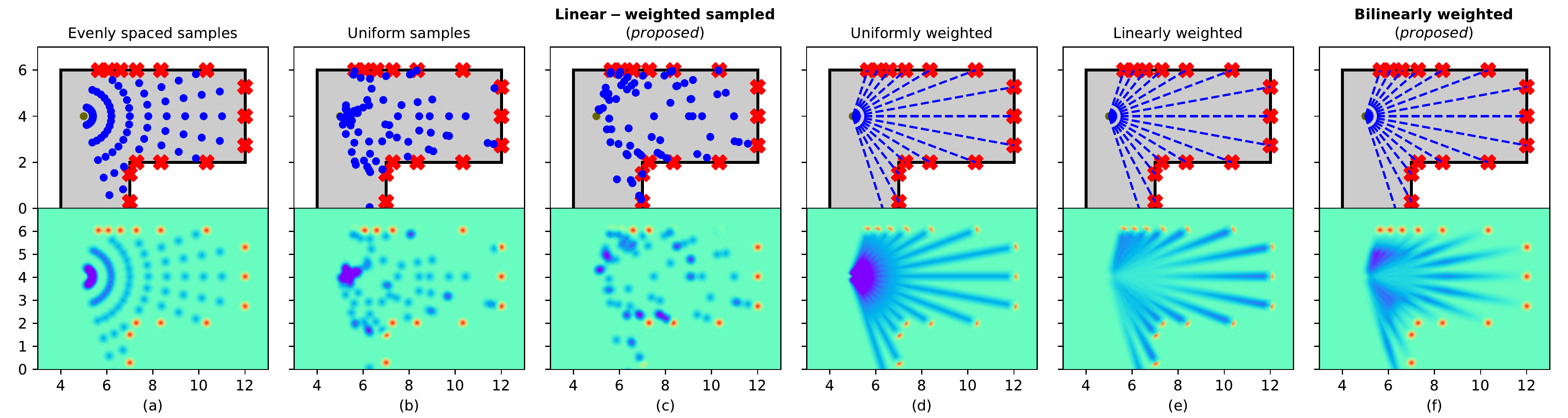}
\caption{Illustrative examples of free space representation with simulated 2D scans. The top row shows the training data and the bottom row shows the inference result. Free space representations include (a) Evenly spaced sampled points, (b) Uniformly sampled points, (c) Linearly sampled points, (d) Line representation, (e) Line representation with linear weight. The representations with bold titles are the most efficient ones with the least free space density overflow (purple area) and best preserving occupied space density (orange dots).}
\label{fig:density_comparison}
\end{figure*}

The difference in sampling strategy is depicted in Figure \ref{fig:sampling_comparison}. Figure \ref{fig:sampling_comparison}(e) corresponds to the exact case discussed in Theorem \ref{theorem:linear_sampling} and in Figure \ref{fig:sampling_comparison}(c), points are linearly sampled along each beam but the beams are not randomly sampled, which is closer to its application in mapping. It can be found from Figure \ref{fig:sampling_comparison} that points sampled evenly or uniformly along beams all have clustered points around the origin, which reduces the sampling efficiency. The problem could be worse in the 3D point cloud from lidar sensors. Besides, The space near the origin is usually not a place of interest when we construct the map.

The strategies illustrated in Figure \ref{fig:sampling_comparison}(a) and (b) are commonly used in literature (see \cite{wang2016fast, doherty2019learning} and \cite{ramos2016hilbert} respectively). Aside from the difference in the distribution of samples, we also change the number of samples per beam to a fixed count instead of a number dependent on beam length. This enables efficient parallel computation and won't affect free space coverage when the sampled points are very few due to resource limitations. Another example of sampled points is shown in Figure \ref{fig:density_comparison}, where a similar difference can be found.

Even though this sampling strategy is more efficient than uniform sampling along each beam, a sufficiently large number of samples are still required to cover the free space without skipping grids. To mitigate this problem, we propose another free space representation in the next section (\ref{sec:rtree-method}).

\subsection{R-Tree based free beam retrieval}
\label{sec:rtree-method}

Inspired by BGKOctomap-L \cite{doherty2019learning}, which uses sampled points to present free space, but calculates kernel (Equation \ref{eq:kernel} with distances from point to beamline, we direct calculate point-to-line distance without sampling points. In order to meet the real-time requirement, an efficient algorithm is required for querying point neighbors to a line or vice versa. Traditionally this problem can be handled by tree data structures including KD-Tree and R-Tree. However, in the case of lidar beams all starting from sensor origin, the bounding boxes of the beamlines usually have a large overlap and the leaf node will contain a large number of points, which can make the query degrade to linear time complexity. A graphical explanation with a 2D world is shown in Figure \ref{fig:coordinate_comparison}. There are specialized tree structures for storing line segments and points, such as PMR Quadtree \cite{hjaltason2002speeding}. Although this data structure can make points query efficient by splitting the regions along the lines, the tree can be very deep due to a large amount of split in our case.

To mitigate the problem, we use the projection of beamlines on a sphere around the sensor origin and construct R-Tree on the sphere, which makes the query of close point-line pairs have logarithmic time complexity. Specifically, given sensor origin at $(x_o, y_o, z_o)$ and a lidar range measurement at $(x_i,y_i,z_i)$, we first calculate the beam representation in spherical coordinate
\begin{align}
    \Delta_x&=x_i-x_o,& \Delta_y &= y_i-y_o,& \Delta_z &= z_i-z_o \label{eq:deltas} \\
    r_i &= \sqrt{\Delta_x^2+\Delta_y^2+\Delta_z^2},& \varphi_i&=\arctan\frac{\Delta_y}{\Delta_x},&\theta_i&=\arccos\frac{\sqrt{\Delta_x^2+\Delta_y^2}}{r} \label{eq:spherical_cs}
\end{align}
where $r$ denotes the range, $\varphi$ denotes the azimuth angle and $\theta$ denotes the elevation angle. Then the beamline can be represented by a varying $r$ from $0$ to $r_i$ with fixed $\varphi=\varphi_i$, $\theta=\theta_i$. After the coordinate conversion, R-Tree is constructed based on spherical coordinates $(r_i, \varphi_i, \theta_i)$. To make the coordinate system compliant with range queries, the R-Tree needs to be modified to consider the periodicity of coordinates.

\begin{figure*}[!t]
\centering
\includegraphics[width=0.8\textwidth]{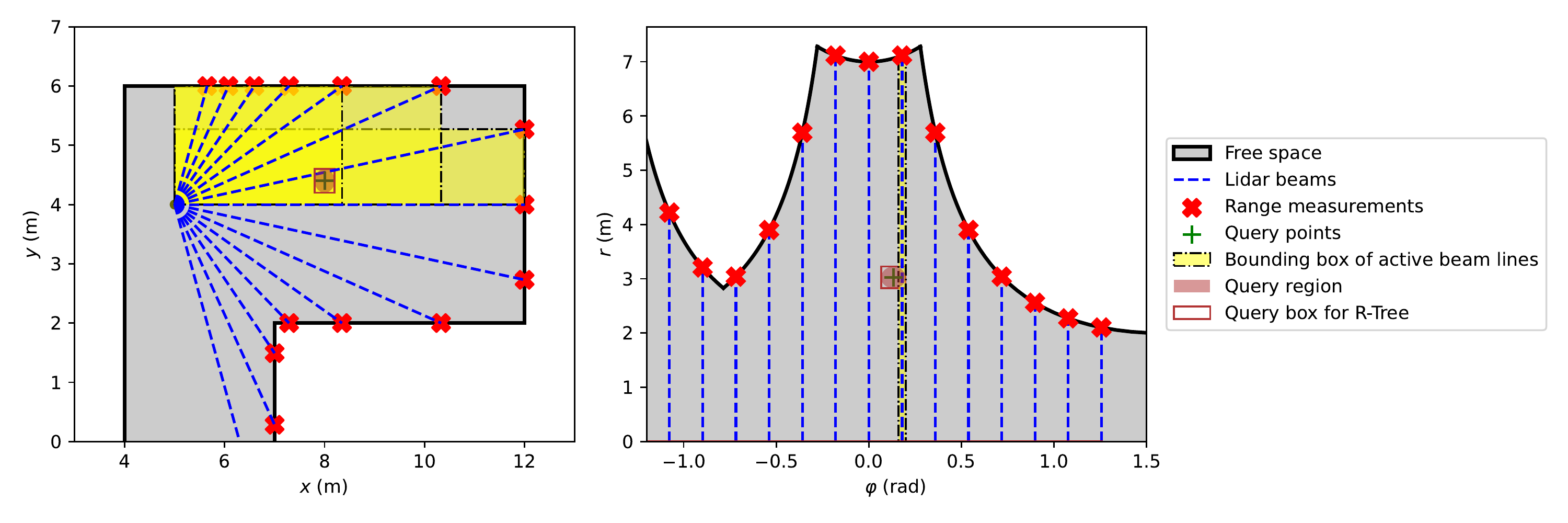}
\caption{Line-based free space representation in 2D Cartesian and polar coordinate. Active beamlines are ones whose bounding box overlaps with the query ball region. In the right plot, the query region is projected from the left plot but the query box is generated using Inequalities \ref{eq:query_boxes} ($\theta_q=0$).}
\label{fig:coordinate_comparison}
\end{figure*}

With the R-Tree constructed, beamlines that are close to the query points can be retrieved by a box in spherical coordinate. Given query point $(x_q, y_q, z_q)$ and query range $l$ (usually the kernel length), the lines close to the query point can be found by first converting the query point to spherical coordinate $(r_q, \varphi_q, \theta_q)$ by Equation \ref{eq:deltas} and \ref{eq:spherical_cs}, and then query with the box below:
\begin{align}
    r_q < r < \infty,\; \varphi_q - \frac{l}{r\cos\theta_q} < \varphi < \varphi_q + \frac{l}{r\cos\theta_q},\; \theta_q - \frac{l}{r} < \theta < \theta_q + \frac{l}{r}
    \label{eq:query_boxes}
\end{align}

Note that this query box is a close approximation of the ball around query point (see query boxes in Figure \ref{fig:coordinate_comparison}), and correct query results can be guaranteed by enlarging $l$ slightly. Besides, the query can be problematic when data points are near the north or south pole of the sphere, but this is very unlikely in a lidar scan where elevation angles of points are usually in a range around 0.

With the proposed spherical R-Tree data structure, we can achieve efficient kernel inference as illustrated in Figures \ref{fig:density_comparison}(d) and \ref{fig:coordinate_comparison}. In Figure \ref{fig:coordinate_comparison}, the query point activates much fewer lidar beams in polar coordinate than in Cartesian coordinate. Nevertheless, a similar problem discussed in section \ref{sec:random-sampling} is encountered where the kernel value of being free will be exceptionally large near sensor origin. Following the same idea, linear weights can be applied to kernel values along each beam as shown in Figure \ref{fig:density_comparison}(e). Furthermore, to ease the fore-mentioned problem of false negatives due to shallow beam angle from \cite{hornung2013octomap}, we propose to use bilinear weights along the beam (the weight peaks at the middle of each lidar beam), which is illustrated in Figure \ref{fig:density_comparison}(f).


\section{Experiment and Analysis}
\label{sec:results}

To exam the performance improvement of proposed free space representations, we conducted experiments on the real-world dataset SemanticKITTI \cite{behley2019semantickitti}. SemanticKITTI is a large-scale dataset based on the KITTI odometry dataset, with point-wise semantics from 19 categories, and there are about 65k lidar measurements per frame in the SemanticKITTI dataset. We adopt the point cloud data with inferred labels provided by \cite{gan2020bayesian} for our experiment. The hardware used for the experiment is a desktop computer equipped with an 8-core CPU (Ryzen 2700), 16G RAM. The software environment is Ubuntu 18.04 with bundled GCC 7.3.0 and Point Cloud Libarary (PCL) 1.8.1. We also adopt similar parameters from \cite{gan2020bayesian}, with kernel length $l=0.3$, kernel scale $\sigma_0=0.1$ and Dirichlet prior $\alpha^k_0=0.001$. During the experiments for time comparison, the map resolution is set as 0.3m.

Although our approach applies to any GPOM \cite{o2012gaussian} variations, we test our improvements using a semantic mapping framework to best demonstrate its efficiency. For conventional occupancy maps, classification can be conducted by using binary semantics, and map updates will be much faster than the 19-category classification in our experiment. The baseline \cite{gan2020bayesian} used for comparison is compiled based on the official code repository provided by the authors except for minor modifications for frame time reporting. Primary results will be shown in the following paragraphs while supplementary materials can be referred to for further details.

\begin{figure*}[!t]
\centering
\includegraphics[width=\textwidth]{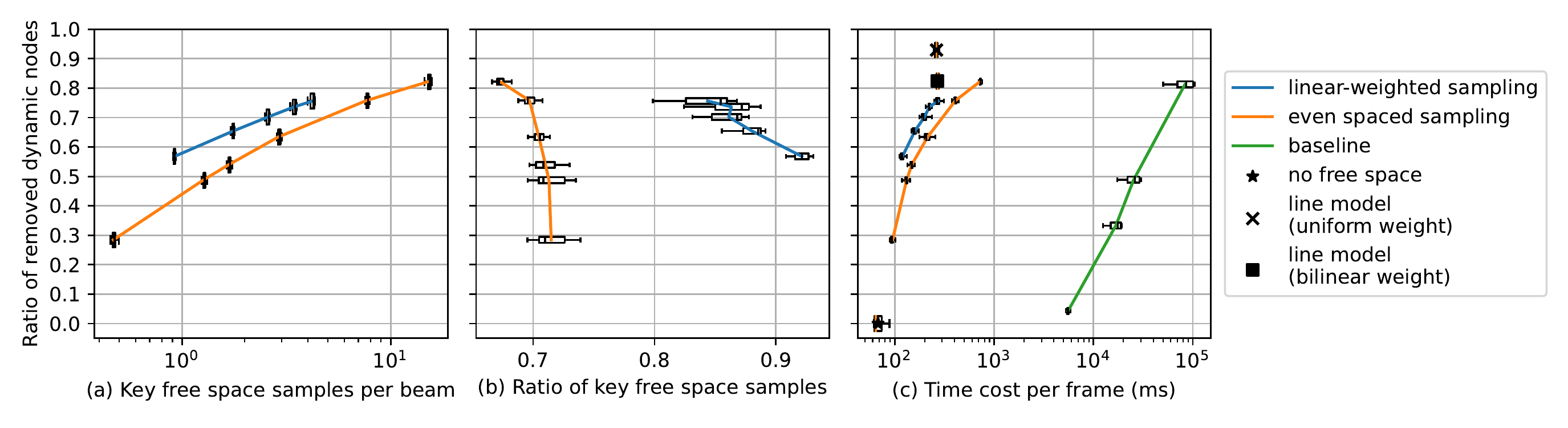}
\caption{Quantitative comparison of mapping performance with different approaches. The number of samples per beam varied in experiments, forming the lines in the plots. Baseline is from \cite{gan2020bayesian}.
Comparison notes: (a) left-top is better. (b) right-top is better. (c) left-top is better. }
\label{fig:performance_comparison}
\end{figure*}

\definecolor{sem_car}{rgb}{0.960, 0.588, 0.392}
\definecolor{sem_road}{rgb}{1.0, 0.0, 1.0}
\definecolor{sem_builidng}{rgb}{0.0, 0.784, 1.0}
\definecolor{sem_vegetation}{rgb}{0.0, 0.647, 0.0}
\definecolor{sem_other_ground}{rgb}{0.294, 0.0, 0.686}
\definecolor{sem_terrain}{rgb}{0.314, 0.941, 0.588}
\definecolor{sem_other_vehicle}{rgb}{1.0, 0.314, 0.392}
\begin{figure*}[!t]
\centering
\begin{tabular}{@{} c @{} c @{} c @{}}
baseline&&proposed\\
\includegraphics[width=0.3\textwidth, trim={0 2cm 0 2cm}, clip]{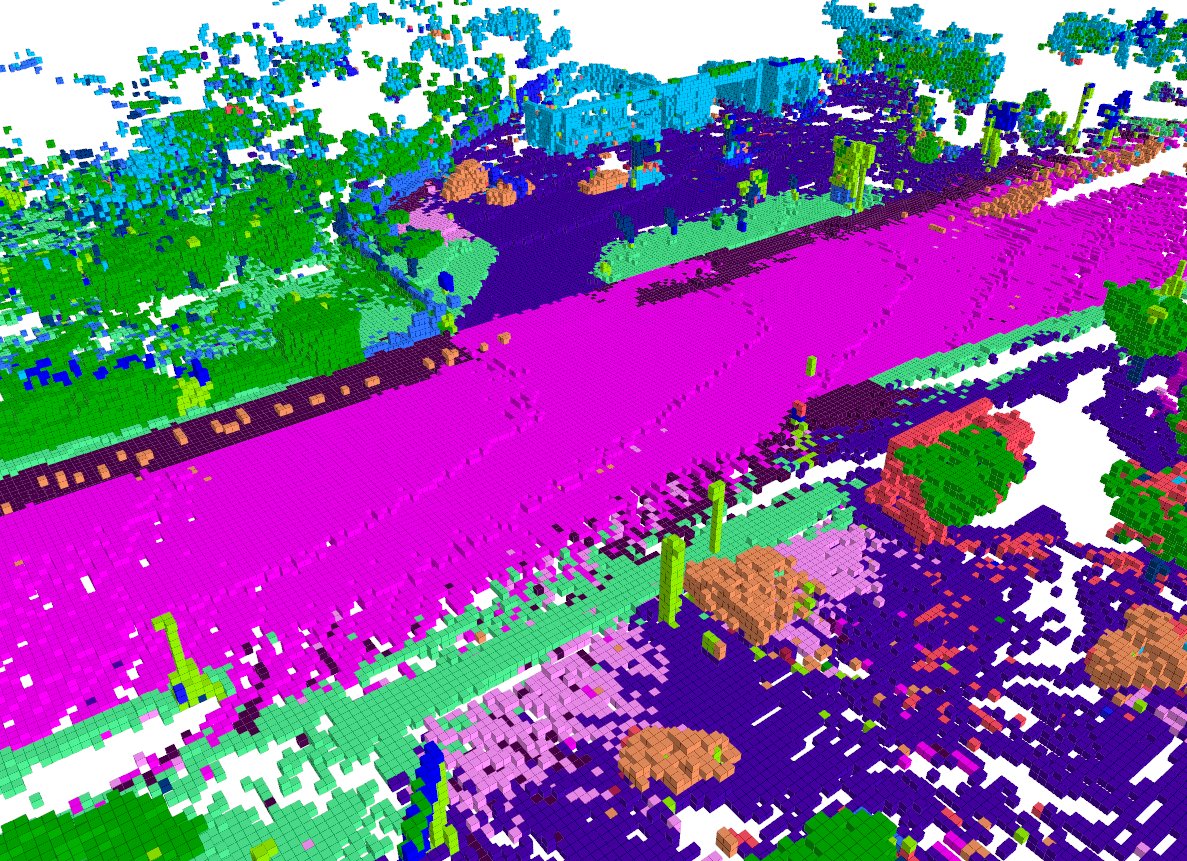}&
\includegraphics[width=0.3\textwidth, trim={0 2cm 0 2cm}, clip]{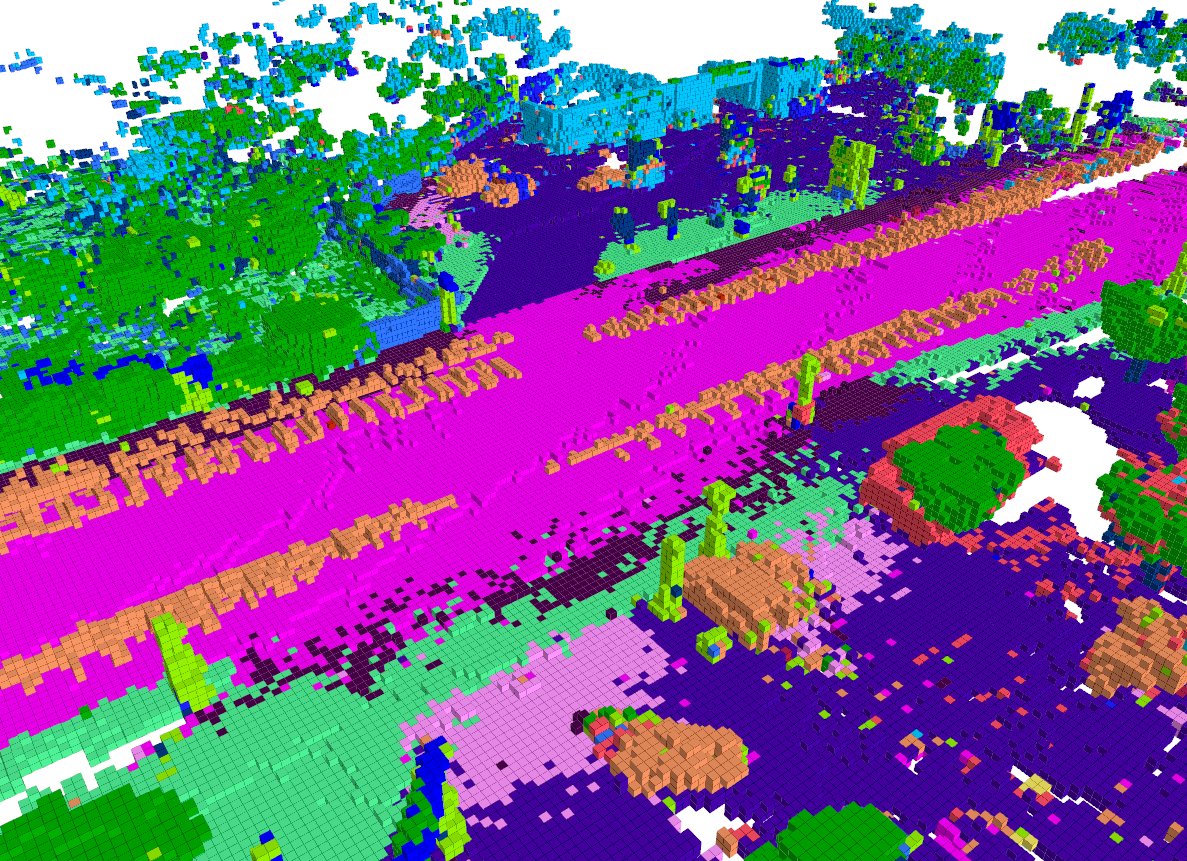}&
\includegraphics[width=0.3\textwidth, trim={0 2cm 0 2cm}, clip]{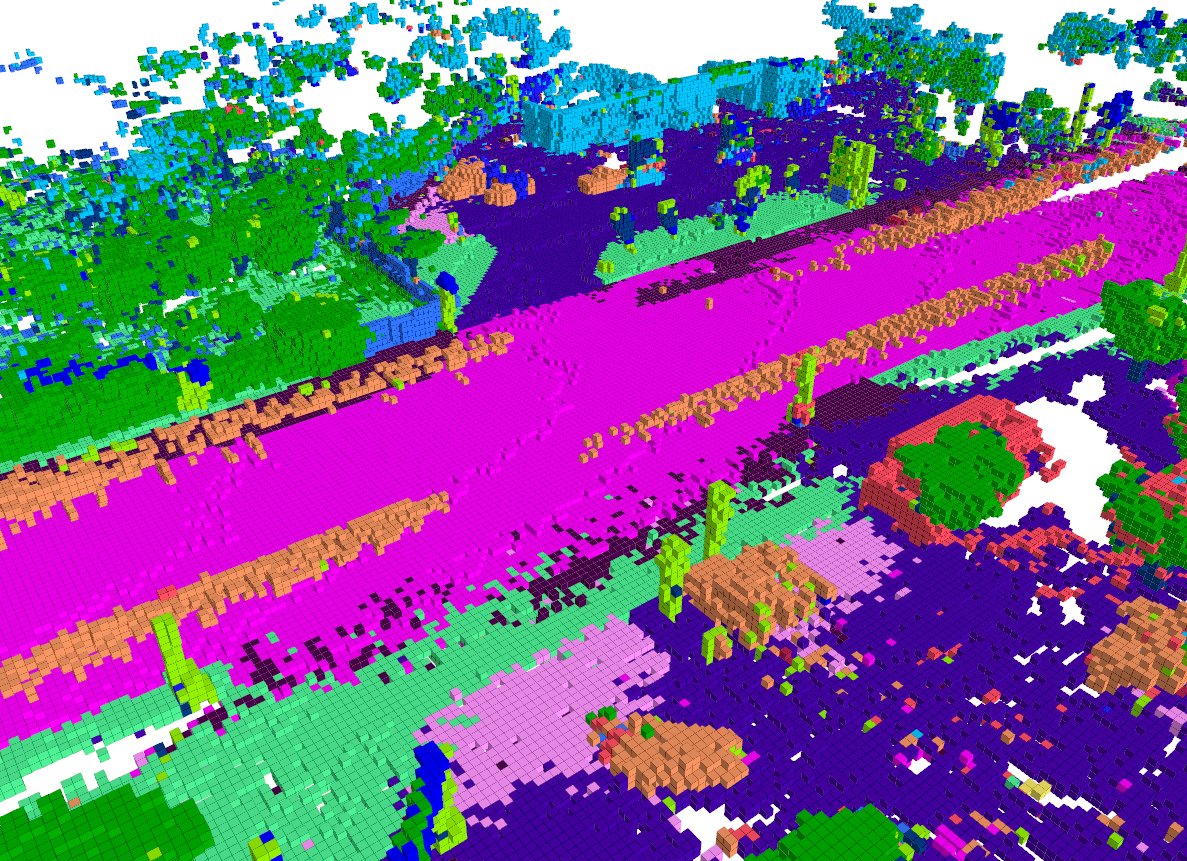}\\
(a1) \small{$83.37\pm17.06$s} &(b1) \small{$16.79\pm2.24$s}&(c1) \small{$\mathbf{119.78\pm6.43}$ms} \\
\includegraphics[width=0.3\textwidth, trim={0 2cm 0 2cm}, clip]{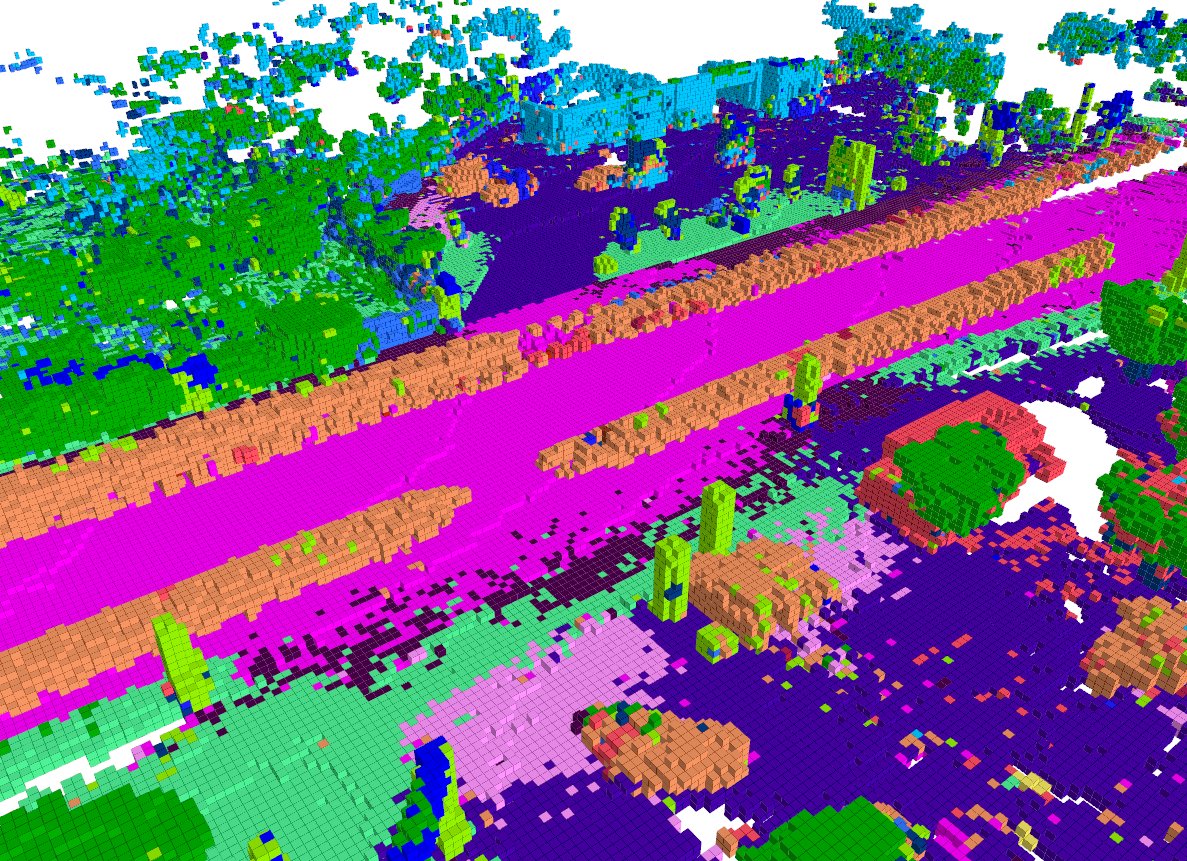}&
\includegraphics[width=0.3\textwidth, trim={0 2cm 0 2cm}, clip]{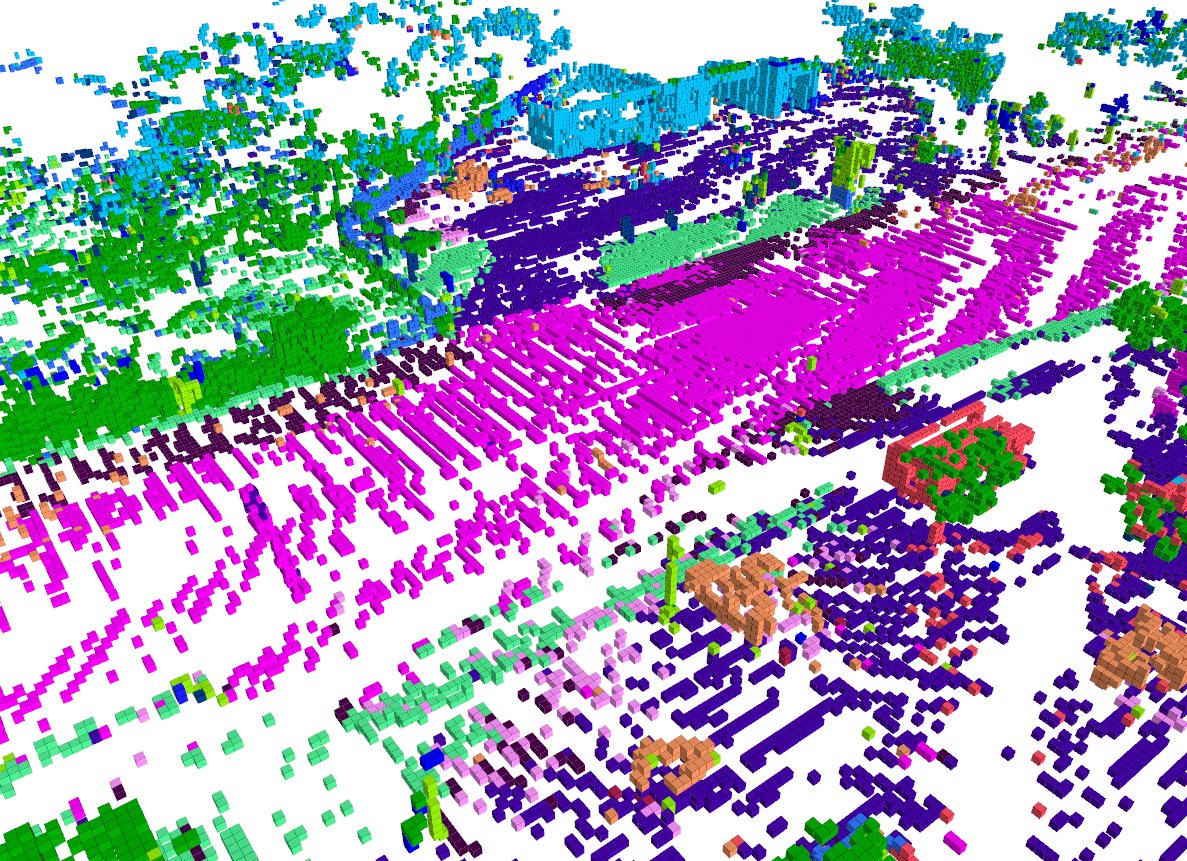}&
\includegraphics[width=0.3\textwidth, trim={0 2cm 0 2cm}, clip]{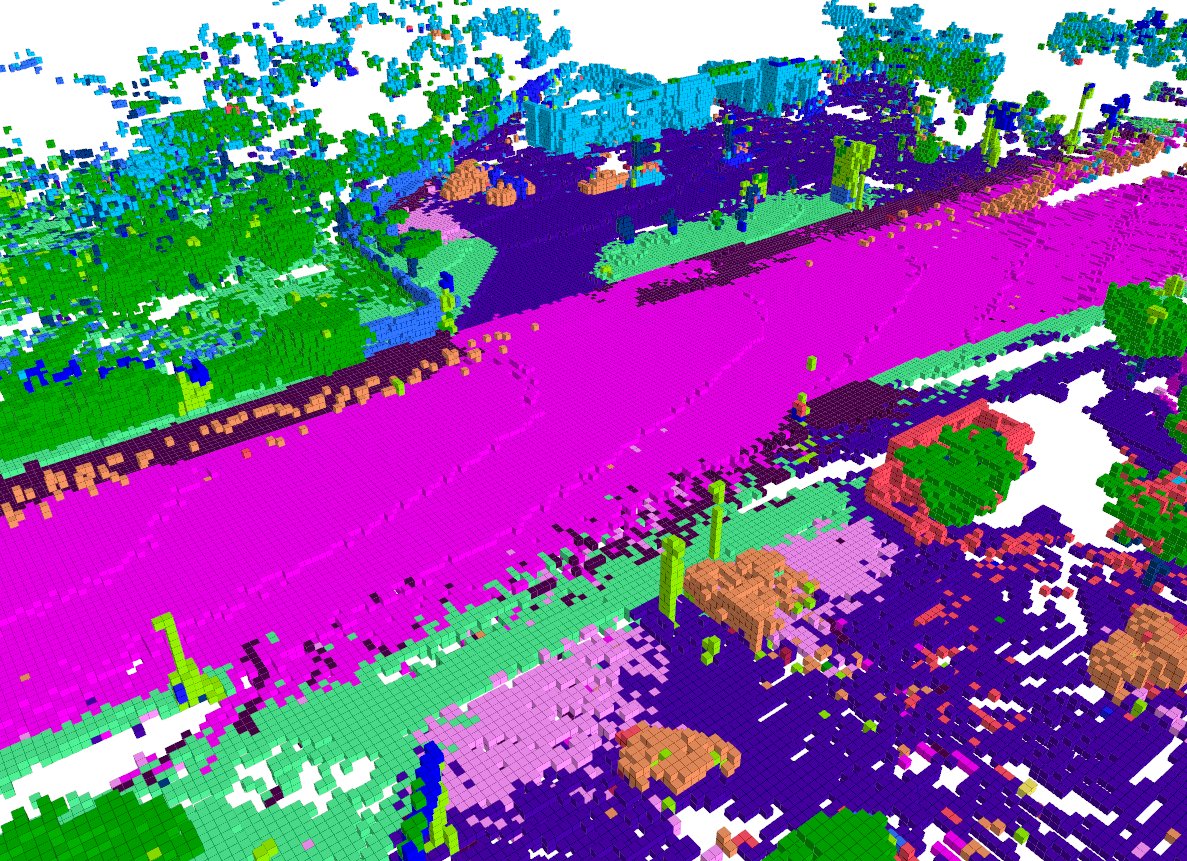}\\
(a2) \small{$68.05\pm7.14$ms} &(b2) \small{$263.60\pm17.40$ms}&(c2) \small{$\mathbf{269.04\pm11.54}$ms}\\
\end{tabular}
\caption{Qualitative results and frame time with different free space representation in a 3s segment of SemanticKITTI. (a) Commonly used baseline methods. (b) Other tried methods. (c) Proposed methods. Specifically, (a1)(b1) Evenly sampled points with 1m and 10m gap respectively (from baseline \cite{gan2020bayesian}). (c1) 1 linearly sampled point per beam. (a2) No free space representation. (b2)(c2) Line-based representation with uniform and bilinear weight respectively. \\\textit{Legend of map grids: \textcolor{sem_car}{$\blacksquare$ Car}
\textcolor{sem_road}{$\blacksquare$ Road}
\textcolor{sem_builidng}{$\blacksquare$ Building}
\textcolor{sem_vegetation}{$\blacksquare$ Vegetation}
\textcolor{sem_other_ground}{$\blacksquare$ Other ground}
\textcolor{sem_terrain}{$\blacksquare$ Terrain}
\textcolor{sem_other_vehicle}{$\blacksquare$ Other vehicle}
}. Labels of proposed methods are in bold style. Best viewed in color and zoom in.}
\label{fig:semantickitti_qualitative}
\end{figure*}

\subsection{Accuracy comparison on dynamic objects}

There are two motivations for free space representation: to reduce false positives in mapping and to handle dynamic objects. Following the second one, we use the number of map nodes (i.e map grids) classified as car as the metric for the mapping quality of free space. In the data sequence used for the experiment in Figures \ref{fig:performance_comparison} and \ref{fig:semantickitti_qualitative}, most cars are moving. Therefore the fewer nodes classified as car the better mapping quality is achieved. For example in Figure \ref{fig:semantickitti_qualitative}(a2), several long blocks are classified as car due to aggregated mapping results from different frames. With proper free space representation as in Figure \ref{fig:semantickitti_qualitative}(a1) and (c2), most false-positive car blocks on road can be eliminated.

The relationship between free space representation and mapping quality of dynamic objects is quantitatively illustrated in Figure \ref{fig:performance_comparison}. For point-based free space representation, key free space samples are those samples passed through voxel-grid down-sampling from PCL, which is used in gathering training data to reduce computation. More valid free space samples will lead to better mapping quality for free space. The efficiency of the proposed linear-weighted sampling strategy is validated by Figure \ref{fig:performance_comparison}(b) with fewer points filtered by down-sampling. On the other hand, for line-based free space representation, all valid space can be covered without fine-grained sampling. Thus we can achieve even better quality as in Figure \ref{fig:performance_comparison}(c). However, it's worth mentioning that without the bilinear weighting proposed in Section
\ref{sec:rtree-method}, there will be a lot of false negatives in the map as shown in Figure \ref{fig:semantickitti_qualitative}(b2) and (c2). This demonstrates the problem originated in \cite{hornung2013octomap}, where the query point will be falsely negative if it's too close to a lidar beam than its actual hit point.


\subsection{Computation speed analysis}

\begin{table}[!h]
\captionsetup{justification=centering}
\caption{Time complexity comparison of different free space representations. $N$ - number of training data points. $M$ - number of testing points. $K$ - (average) number of free space points per beam. $^*$: this part is parallelizable. }
\centering
 \begin{tabular}{c | c c c} 
 \hline
 Method & Creating training data & Training & Testing \\
 \hline\hline
 Baseline \cite{gan2020bayesian} & $\mathcal{O}(KN+N\log N)$ & $\mathcal{O}^*(M\log N+N)$ & $\mathcal{O}^*(M+N)$ \\
 Ours + evenly spaced sampling & $\mathcal{O}(KN)$ & $\mathcal{O}^*(M+N)$ & $\mathcal{O}^*(M+N)$ \\ 
 Ours + linear-weighted sampling & $\mathcal{O}^*(KN)$ & $\mathcal{O}^*(M+N)$ & $\mathcal{O}^*(M+N)$ \\
 Ours + line based & $\mathcal{O}(N)$ & $\mathcal{O}^*(M\log N+N)$ & $\mathcal{O}^*(M+N)$ \\
 \hline
 \end{tabular}
\label{tab:time-complexity}
\end{table}

The quantitative results are shown in Figure \ref{fig:performance_comparison} and \ref{fig:semantickitti_qualitative}. Our method is able to achieve real-time ($>$10Hz) mapping without free space consideration, and near real-time performance with proposed free space representations. With the linear sampling strategy proposed in Section \ref{sec:random-sampling}, minimum time consumption is achieved with reasonably good mapping quality. With the line-based representation and spherical R-Tree proposed in Section \ref{sec:rtree-method}, the best mapping quality is achieved with around a 4Hz frame rate. Proposed line-based representation is able to achieve better mapping quality with less time consumption than point-based representations.

Further analysis of time complexity with different free space representations is evaluated in Table \ref{tab:time-complexity}. It's worth noting that the number $K$ of free space points per beam is vastly different in evenly spaced sampling and linear-weighted sampling. In evenly spaced sampling, $K$ can reach 50 if the range measurements are 50m away and free space points are sampled with 1m gap. However, the number $K$ is fixed for parallelization and merely $K=5$ is required to achieve similar quality in our experiments. R-Tree is used in both the baseline method and our line-based representation, resulting in logarithmic complexity. The difference is that R-Tree is used for point query in baseline, but for line query in our method. Furthermore, bulk insertion is used in our method to prevent logarithmic complexity in the training data creation step.

Thanks to the better time complexity, our methods are able to achieve more than \textbf{100x} speedup compared with the baseline method (see Figure \ref{fig:performance_comparison}(c)). Real-time performance with free space modeling can be achieved by using fewer semantics categories or limiting the range of inference positions.

\section{Conclusion}

In this paper, two free space representations are proposed to improve occupancy mapping based on Bayesian sparse kernel inference. Point-based representation with linear sampling is faster and flexible with the number of points per beam, while line-based representation can cover all free space resulting in better mapping quality. Spherical R-Tree is used for storing beamlines and faster close point-line pairs querying. Experiments on a real-world dataset with lidar point cloud validate the improvements in speed and mapping quality. Our methods enable real-time semantic 3D occupancy mapping on large-scale outdoor scenarios.

It's worth noting that the proposed spherical R-Tree algorithm is suitable for general purposed distance calculation between points and lines, as long as all the lines start from the same origin. Efficient free space can also enable other applications on dynamic scenarios including dynamic particle handling (such as lidar reflections on raindrops and snowflakes). The occlusion issue for autonomous vehicles and robots can also be addressed by explicitly representing occlusion space with a similar strategy to the line-based representation proposed in this paper. These are some important problems to be explored in the future.



\clearpage
\acknowledgments{}


\bibliography{main}  

\end{document}